\title{Determination of Digital Straight Segments Using the Slope}
\author{Alejandro Cartas, María Elena Algorri\\
	    \emph{Neuroimaging Laboratory} \\ \emph{Instituto Tecnológico Autónomo de México}}
\date{August 2009}
\newtheorem{theorem}{Theorem}
\newtheorem{lemma}{Lemma}
\newtheorem{corollary}{Corollary}
\begin{document}

\bibliographystyle{plain}

\maketitle

\begin{abstract}
We present a new method for the recognition of digital straight lines based on the slope. This method combines the Freeman's chain coding scheme and new discovered properties of the digital slope introduced in this paper. We also present the efficiency of our method from a testbed.\\

\textit{Keywords: Digital straight lines, Digital geometry, Digital slope.}
\end{abstract}

\section{Introduction}

The recognition of digital straight segments (DSS for short) is an old and common problem in digital geometry. Consequently, many algorithms have been developed to solve this problem, such as \cite{Buzer2006,Debled-Rennesson1995,Soille1991}. These algorithms are based on particular properties of the digital straight lines (DSL) that have been discovered throw the years, but just a few uses the slope because it's not easy to deal with its discrete nature. For instance, in contrast with a real line, the represented slope $m$ of a DSL could be different from the slope between two distinct points of the same digital line. Using only the first octant, in this paper:

\begin{itemize}
\item We present the theorems that state the boundaries of the slope between any fixed point of the DSL and its successive points (section \ref{sec:slopeBoundaries}).
\item We propose an algorithm for the recognition of DSS that combines the Freeman chain coding scheme and the knowledge about the slope (section \ref{sec:method}).
\item Finally, we show the tests under which the proposed algorithm was submitted, which were intended to quantify their effectiveness and obtain relevant information (section \ref{sec:tests}).
\end{itemize}

\section{Background}
\subsection{Distance $D_8$}

In the digital geometry it is useful to work with other metric spaces other than the Euclidean distance. One of such metric spaces is the distance $D_8$, also known as the chessboard distance or the $L_\infty$ metric. So if $p, q\in \mathbb{R}^2$, $p=(x_0,y_0)$, and $q=(x_1,y_1)$, we can define the distance $D_8$ between $p$ and $q$ as:

\begin{equation}
D_8(p,q)=max\{|x_1-x_0|,|y_1-y_0|\}
\label{eq:distanceD8}
\end{equation}

In the digital space, the first four distances $D_8$ from a fixed grid point are shown in Fig. \ref{fig:firstFourD8Distances}.

\begin{figure}[h]
\centering
\includegraphics[scale=0.15]{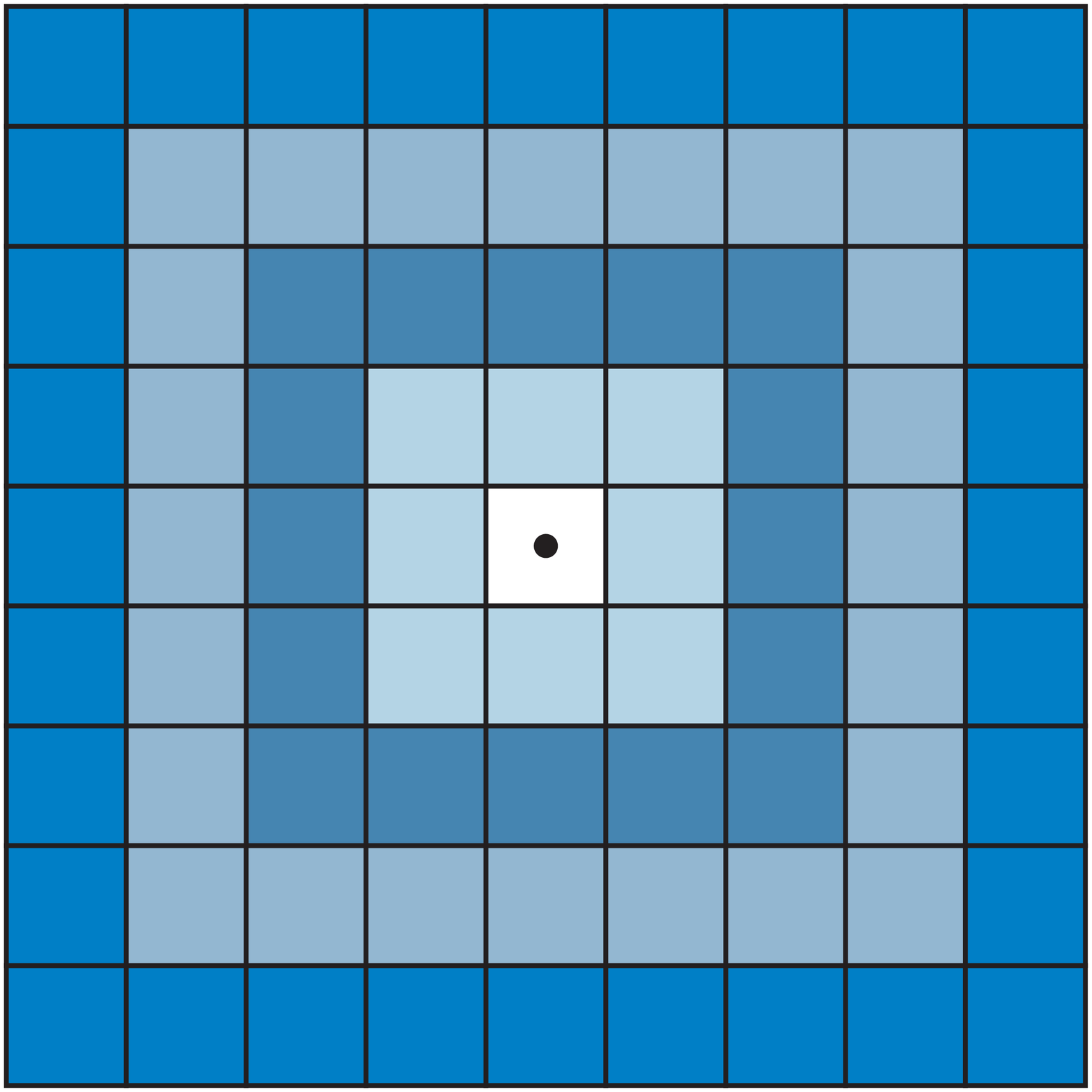}\hspace{5mm}
\includegraphics[scale=0.15]{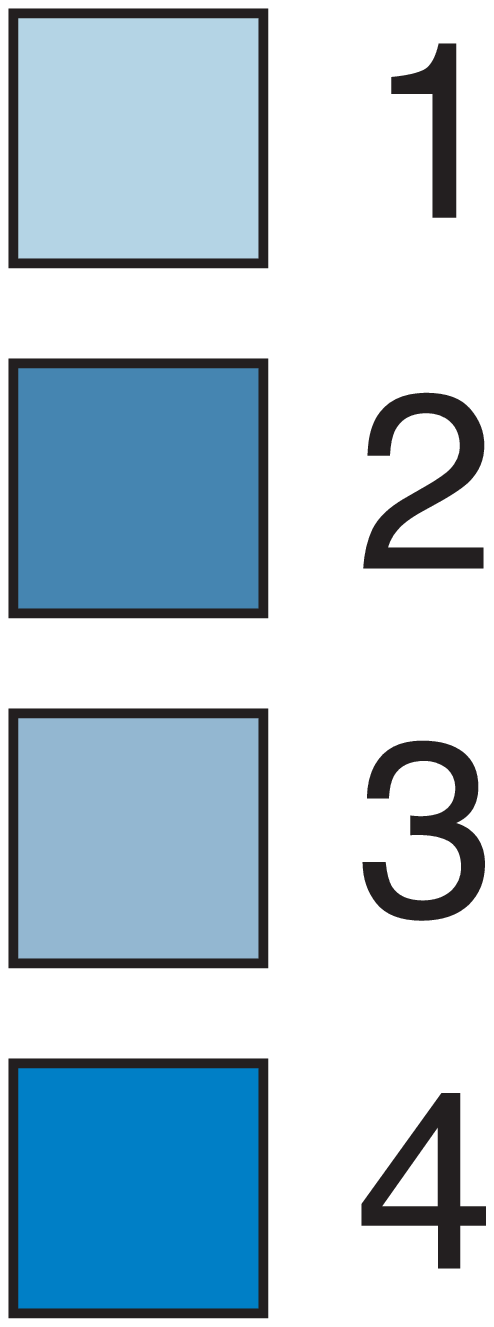}
\caption[Distances $D_8$ = 1, 2, 3, and 4 from the white filled pixel in the middle]{Distances $D_8$ = 1, 2, 3, and 4 from the white filled pixel in the middle.}
\label{fig:firstFourD8Distances}
\end{figure}

\subsection{Problem Statement}

Let $\rho_0, \ldots, \rho_n \in\mathbb{Z}^2$ be the sequence of points that describe the 8-neighborhood boundary of an object, then the problem is to find the set $\mathcal{L}$ of exact DSS that are defined by $\rho_0, \ldots, \rho_n$.

\subsection{Digital Straight Segments}

\begin{figure*}[t]
\centering

\begin{minipage}{\textwidth}
\begin{center}
\subfloat[]{\label{fig:DSS1.1} \hspace{5mm}\includegraphics[scale=0.8]{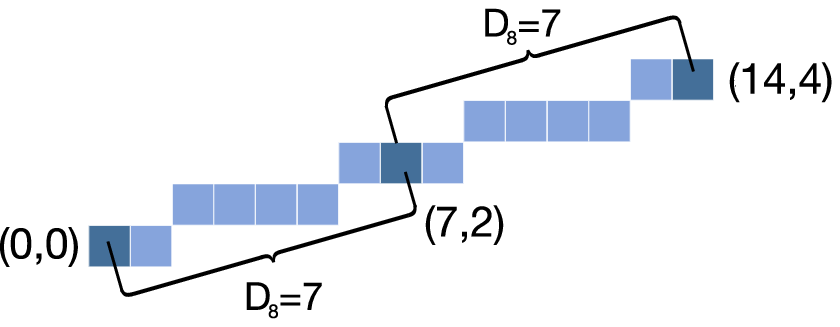}\hspace{5mm}}\hspace{0cm}
\subfloat[]{\label{fig:DSS1.2} \hspace{5mm}\includegraphics[scale=0.8]{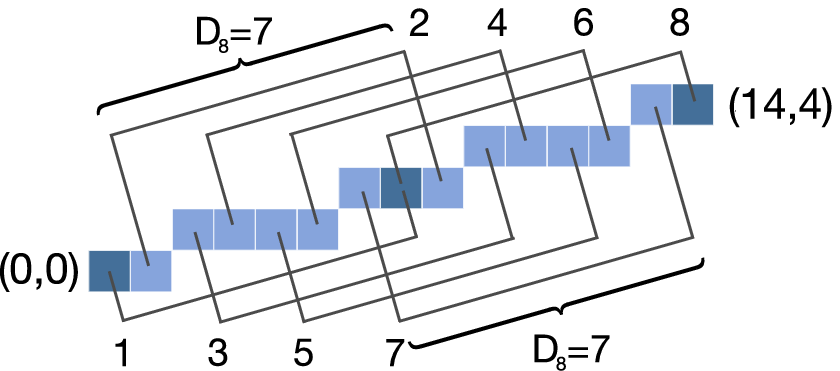}\hspace{5mm}}
\end{center}
\end{minipage}

\subfloat[]{\label{fig:DSS1.3} 
\begin{minipage}{\textwidth}
\begin{center}
\includegraphics[scale=0.8]{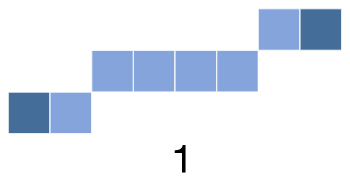}\hspace{6.5mm}
\includegraphics[scale=0.8]{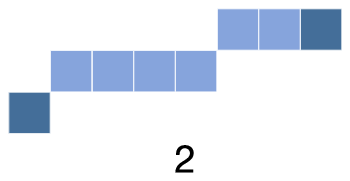}\hspace{6.5mm}
\includegraphics[scale=0.8]{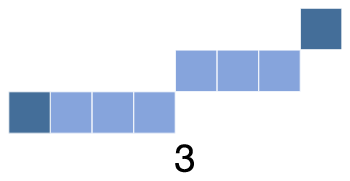}\hspace{6.5mm}
\includegraphics[scale=0.8]{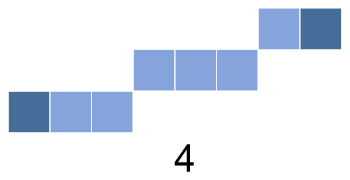}\\
\vspace{5mm}
\includegraphics[scale=0.8]{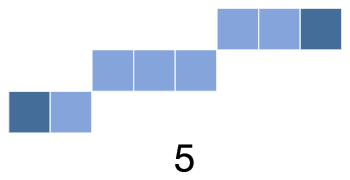}\hspace{7mm}
\includegraphics[scale=0.8]{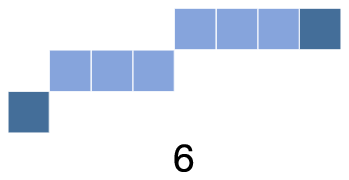}\hspace{7mm}
\includegraphics[scale=0.8]{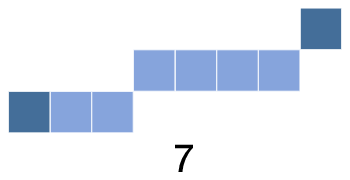}
\end{center}
\end{minipage}
}

\caption[Optimal digital line segment with slope $2/7$ showing two periods]{Optimal digital line segment with slope $2/7$. (a) The DSS showing two periods of length $D_8=7$ between the dark blue filled points. (b) All the paths in the DSS that are defined by two points with distance $D_8=7$ between them and slope equal to $m$. (c) The seven different \textit{basic sets of points} that define the DSS and were extracted from the paths in (b).}
\label{fig:DSS1}
\end{figure*}

Basically, the algorithms for drawing any DSS need a starting point $P(x_0,y_0) \in\mathbb{Z}^2$, a slope $m\in\mathbb{R}$, a direction, and the number of points to draw. In order to achieve a \textit{sense of straightness}, these algorithms must determine the points that are closer to the points of the real line with the same starting point and slope. For DSS with irrational slopes there is only one way to achieve this sense of straightness, in other words, they are described uniquely. In contrast, DSS with rational slopes could have many combinations of points that achieve the sense of straightness. For both kinds of DSS,  the optimal points that give the sense of straightness with respect to $P$ belong to the set
 
\small
\begin{equation}
\{(x,y)\in\mathbb{Z}^2:x>x_0 \wedge y=\lfloor m\cdot(x-x_0)+y_0+1/2 \rfloor\}
\label{eq:optimalCriterion}
\end{equation}
\normalsize

The reason why rational DSS are not uniquely described is that they are periodic. The periodicity means that the points of any rational DSS are defined by a \textit{basic set of points}. Specifically, if the DSS slope $m$ is an irreducible rational fraction equal to $r/s$, then the \textit{basic set of points} is $\{(x_0, y_0), \ldots, (x_s, y_s) \}$, and any point $p$ belonging to the DSS is equal to $(x_i+s\cdot n, y_i+r\cdot n)$, where $i = 0,\dots, s$ and $n \in \mathbb{Z}$. For example, an optimal DSS with slope $2/7$ is shown in Fig. \ref{fig:DSS1.1}; the basic set of points of this DSS are the points in the path from $(0, 0)$ to $(7, 2)$ and, because this basic set of points is repeated once, we can say that it has two periods. Furthermore, because the number of points of the basic set of points is $s+1$, the length of the period is $s$. Hence the length of the period of DSL in Fig. \ref{fig:DSS1.1} is 7.

It's easy to obtain all the different basic sets of points that define any rational DSS. We know that the value of the slope between two points of the rational DSS with distance $D_8=s$ is equal to $m$. So the first $s$ paths in a rational DSS of two periods that are defined by two points with distance $D_8=s$ with each other are basic sets of points for the DSS. For instance, if we extract the first 7 paths of the optimal DSS with slope $2/7$ that are shown in Fig. \ref{fig:DSS1.2}, then we will have all the different basic sets of points that define the DSS and are shown in Fig. \ref{fig:DSS1.3}.

\subsection{The slope of Digital Straight Lines}
\label{sec:DSLslope}

Lets define $\boldsymbol{\mu}_{Lp}(i)$ as the slope between a point $p$ of a digital line $L$ and its $i$ successive point in $L$ to the right of $p$, i.e.\ the point to the right of $p$ that have a distance $D_8=i$ with respect to $p$. From now on, if $L$ is a $DSS$ and $p$ is not explicitly defined, i.e.\  $\boldsymbol{\mu}_{L}(i)$, then $p$ will be the first point of $L$ from left to right.

When $L$ is an irrational digital line with slope $m$ and $p$ and $q$ are two distinct points that belong to $L$, we are certain of two things. The first thing we know is that, by definition, $\boldsymbol{\mu}_{Lp}(\infty)$ is equal to $m$. And the second thing is that there are not $\boldsymbol{\mu}_{Lp}(i)$ and $\boldsymbol{\mu}_{Lq}(i)$ equal for all the possible values of $i$, because the aperiodicity of $L$.

When $L$ is a rational digital line with slope $m$ equal to an irreducible fraction $r/s$, we know two things derived from its periodicity. If $p$ is any point of $L$ and $i$ is multiple of $s$, then $\boldsymbol{\mu}_{Lp}(i)$ is equal to $m$; it follows that if $q$ is any another point of $L$, then $\boldsymbol{\mu}_{Lp}(i)=\boldsymbol{\mu}_{Lq}(i)$. Finally, if $\{p_1,\ldots, p_s\}$ are any $s$ consecutive points belonging to $L$, then $\{\boldsymbol{\mu}_{Lp_1}, \dots, \boldsymbol{\mu}_{Lp_s}\}$ is the basic set that contains all the possible functions $\boldsymbol{\mu}_{Lq}$ for any point $q$.

For example, let $S_1$ and $S_3$ be two rational DSS that have two periods and are defined by the first and third basic set of points in Fig. \ref{fig:DSS1.3} respectively. So if we obtain $\boldsymbol{\mu}_{S_1}$ and $\boldsymbol{\mu}_{S_3}$ from their first point and plot their graphs, we'll have the slopes shown in Fig. \ref{fig:twoSlopes}. We can see in Fig. \ref{fig:twoSlopes} that $\boldsymbol{\mu}_{S_1}$ and $\boldsymbol{\mu}_{S_3}$ are equal to the represented slope $2/7$ at distances $D_8$ multiple of $7$, as we said in the last paragraph.

\begin{figure}[t]
\centering
\includegraphics[scale=0.65]{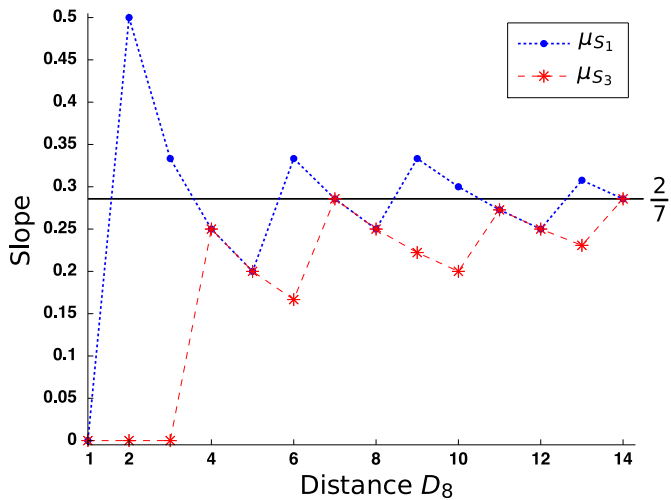}
\caption[$\boldsymbol{\mu}$ functions of two DSS that are defined by the first and third basic set of points in Fig. \ref{fig:DSS1.3} for their first two periods]{$\boldsymbol{\mu}$ functions of two DSS that are defined by the first and third basic set of points in Fig. \ref{fig:DSS1.3} for their first two periods.}
\label{fig:twoSlopes}
\end{figure}

\subsubsection{Boundaries of the slope}
\label{sec:slopeBoundaries}

In Fig. \ref{fig:twoSlopes} we can also observed that $\boldsymbol{\mu}_{S_1}$ and $\boldsymbol{\mu}_{S_3}$ tend to the value of the represented slope $2/7$, but we don't know the rate at which the discrete slope is reaching the real slope value. Therefore, the next two theorems will state how close from the real slope is the value of the $\boldsymbol{\mu}$ function given any distance $D_8$.

\begin{lemma}
Let $P(x_0,y_0)\in \mathbb{Z}^2$, $n\in \mathbb{N}$, $Q_0(x_0+n,y_0), Q_1(x_0+n,y_0+1), \ldots, Q_n(x_0+n,y_0+n)\in \mathbb{Z}^2$ be the points that have a distance $D_8=n$ in the first octant with respect to $P$, and $\overline{PQ_i}$ denote the slope between $P$ and $Q_i$. Then it is true that

\begin{equation}
\overline{PQ_{i+1}}-\overline{PQ_i}=\frac{1}{n}
\label{eq:lemma1}
\end{equation}
\end{lemma}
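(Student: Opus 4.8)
The statement is a direct coordinate computation, so the plan is simply to unwind the definitions. First I would record that each point $Q_i=(x_0+n,\,y_0+i)$ with $0\le i\le n$ genuinely lies in the first octant at distance $D_8=n$ from $P$: by (\ref{eq:distanceD8}) the horizontal displacement is $|(x_0+n)-x_0|=n$ and the vertical displacement is $|(y_0+i)-y_0|=i\le n$, so $D_8(P,Q_i)=\max\{n,i\}=n$, and the constraint $0\le i\le n$ is exactly the first-octant condition. This just confirms that the points named in the hypothesis are the intended ones.

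Next I would compute the ordinary (Euclidean) slope $\overline{PQ_i}$ from the coordinates. The decisive observation — and really the only thing worth isolating — is that every $Q_i$ shares the same abscissa $x_0+n$, so the denominator $(x_0+n)-x_0=n$ is independent of $i$; the numerator is $(y_0+i)-y_0=i$. Hence $\overline{PQ_i}=i/n$ for each $i=0,\dots,n$.

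Finally I would subtract consecutive values: $\overline{PQ_{i+1}}-\overline{PQ_i}=\frac{i+1}{n}-\frac{i}{n}=\frac{1}{n}$, which is (\ref{eq:lemma1}). There is no substantive obstacle here; the only point of care is the $D_8$ bookkeeping that keeps the hypothesis internally consistent, together with the remark that it is precisely the constancy of the $x$-difference across the whole family $Q_0,\dots,Q_n$ that forces the slopes to form an arithmetic progression with common difference $1/n$.
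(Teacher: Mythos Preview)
Your proposal is correct and follows essentially the same route as the paper: compute $\overline{PQ_i}=i/n$ directly from the coordinates and subtract consecutive terms to obtain $1/n$. The only difference is that you add a brief check that $D_8(P,Q_i)=n$, which the paper omits as implicit in the hypothesis.
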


\begin{proof}

\noindent  The slopes between $P$ and $Q_0,\ldots,Q_n$ are

\small
\begin{equation}
\overline{PQ_{0}}=0, \overline{PQ_{1}}=\frac{1}{n}, \overline{PQ_2}=\frac{2}{n}, \ldots, \overline{PQ_n}=1
\label{eq:lemma1:1}
\end{equation}
\normalsize

\noindent Thus, we have
\begin{equation}
\overline{PQ_{i}}=\frac{i}{n}
\label{eq:lemma1:2}
\end{equation}

\noindent So, substituting equation \ref{eq:lemma1:2} in equation \ref{eq:lemma1} results

\begin{equation}
\overline{PQ_{i+1}}-\overline{PQ_i}=\frac{i+1}{n}-\frac{i}{n}=\frac{1}{n}
\label{eq:lemma1:3}
\end{equation}

\end{proof}

\begin{theorem}
\label{theorem:optimalBoundary}

Let the point  $P(x_0,y_0)$ belong to a digital line $L$ with slope $m$ such that  $0\leq m \leq1$, let the successive points of $P$ in $L$ be the optimal points with respect to $P$, and let the point $Q$ be any other point belonging to $L$ with a distance $D_8=n$ and a slope $\overline{PQ}$ with respect to $P$. Then, it is true that

\begin{equation}
\Big{|}m-\overline{PQ}\Big{|}\leq\frac{1}{2\cdot n}
\label{eq:theorem1}
\end{equation}

\end{theorem}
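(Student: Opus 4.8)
The plan is to reduce the statement to an elementary fact about rounding to the nearest integer. First I would fix the coordinates of $Q$. Since $0\le m\le 1$ and $Q$ is a successive point of $P$ in $L$ at distance $D_8=n$ in the first octant, the Lemma applies: $Q$ coincides with one of the points $Q_0,\dots,Q_n$, say $Q=Q_i=(x_0+n,\,y_0+i)$ with $i\in\{0,1,\dots,n\}$, and moreover $\overline{PQ}=\overline{PQ_i}=i/n$. Because the successive points of $P$ are, by hypothesis, the optimal points with respect to $P$, the point with abscissa $x_0+n$ satisfies the optimality criterion \eqref{eq:optimalCriterion}; hence its ordinate equals $\lfloor m n+y_0+1/2\rfloor$, so $y_0+i=\lfloor m n+y_0+1/2\rfloor$, and therefore, since $y_0\in\mathbb{Z}$, $i=\lfloor m n+1/2\rfloor$.

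Next I would rewrite the quantity to be bounded. Combining $\overline{PQ}=i/n$ with $i=\lfloor m n+1/2\rfloor$,
\begin{equation}
\Big|\,m-\overline{PQ}\,\Big| \;=\; \Big|\,m-\frac{i}{n}\,\Big| \;=\; \frac{1}{n}\,\Big|\,m n-\lfloor m n+1/2\rfloor\,\Big|.
\label{eq:plan-reduce}
\end{equation}
Since $n\in\mathbb{N}$, establishing \eqref{eq:theorem1} is thus equivalent to establishing the single inequality $\big|\,m n-\lfloor m n+1/2\rfloor\,\big|\le 1/2$. I would then prove this directly: writing $m n+1/2=k+f$ with $k=\lfloor m n+1/2\rfloor\in\mathbb{Z}$ and $f\in[0,1)$, one has $m n-k=f-\tfrac12\in[-\tfrac12,\tfrac12)$, so $\big|\,m n-k\,\big|\le\tfrac12$. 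This is just the observation that $\lfloor x+1/2\rfloor$ is always a nearest integer to $x$; substituting back into \eqref{eq:plan-reduce} yields \eqref{eq:theorem1}.

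The arithmetic here is genuinely routine; the step I expect to require the most care — the main obstacle — is the first one, namely justifying that $Q$ has abscissa exactly $x_0+n$ and that the hypothesis forces its ordinate to be the rounded value $\lfloor m n+y_0+1/2\rfloor$. This uses the first-octant assumption $0\le m\le1$ (so that a $D_8$-step of length $n$ to the right increases $x$ by exactly $n$) together with the fact that the set in \eqref{eq:optimalCriterion} contains precisely one point per abscissa. The boundary cases $m=0$, $m=1$, and $m n+1/2\in\mathbb{Z}$ need no separate treatment, since the inclusion $f-\tfrac12\in[-\tfrac12,\tfrac12)$ already covers them.
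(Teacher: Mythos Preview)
Your proof is correct and slightly more streamlined than the paper's. The paper argues by a two-case analysis: at distance $D_8=n$ the real line lies between $Q_i$ and $Q_{i+1}$, and the optimal criterion forces $Q=Q_i$ when $\overline{PQ_i}\le m<\overline{PQ_i}+\tfrac{1}{2n}$ and $Q=Q_{i+1}$ when $\overline{PQ_i}+\tfrac{1}{2n}\le m<\overline{PQ_{i+1}}$; Lemma~1 supplies the gap $\overline{PQ_{i+1}}-\overline{PQ_i}=1/n$, and each case is then manipulated to the bound $|m-\overline{PQ}|\le 1/(2n)$. You instead plug the abscissa $x_0+n$ directly into the optimality formula~\eqref{eq:optimalCriterion}, extract $i=\lfloor mn+1/2\rfloor$, and reduce everything to the single rounding inequality $|mn-\lfloor mn+1/2\rfloor|\le 1/2$. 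Your route avoids the case split and the explicit appeal to Lemma~1 (the content of the lemma is absorbed into the arithmetic with the floor), at the cost of requiring the reader to accept the floor-based description of the optimal points as the starting point; the paper's version keeps the geometry of ``which grid point is closer'' in view throughout. Both arguments are essentially the same observation viewed from two sides.
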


\begin{proof}

At a distance $D_8=n$ from $P$ the real line it's between $Q_{i}$ and $Q_{i+1}$. So, using an optimal criterion, $Q$ is equal to $Q_i$ if

\begin{equation}
\overline{PQ_{i}} \leq m < \frac{\overline{PQ_{i+1}}-\overline{PQ_i}}{2}+\overline{PQ_i}
\label{eq:theorem1:1}
\end{equation}

\noindent or $Q$ is equal to $Q_{i+1}$ if

\begin{equation}
\frac{\overline{PQ_{i+1}}-\overline{PQ_i}}{2} + \overline{PQ_i} \leq m < \overline{PQ_{i+1}}
\label{eq:theorem1:2}
\end{equation}

\noindent Now, when $Q$ equals $Q_{i}$, we substitute equation \ref{eq:lemma1} in inequality \ref{eq:theorem1:1}

\begin{equation}
\overline{PQ_{i}} \leq m < \frac{1}{2\cdot n}+\overline{PQ_{i}}
\label{eq:theorem1:3}
\end{equation}

\noindent substracting $\overline{PQ_{i}}$ from the inequality \ref{eq:theorem1:3}

\begin{equation}
0\leq m - \overline{PQ_{i}} < \frac{1}{2\cdot n}
\label{eq:theorem1:8}
\end{equation}

\noindent substituting $\overline{PQ}$ for $\overline{PQ_{i}}$ results

\begin{equation}
m - \overline{PQ} < \frac{1}{2\cdot n}
\label{eq:theorem1:9}
\end{equation}

\noindent wich proves the statement. For the last case, when $Q$ equals $Q_{i+1}$, we substract $\overline{PQ_{i+1}}$ from inequality \ref{eq:theorem1:2}

\begin{equation}
-\frac{\overline{PQ_{i+1}}-\overline{PQ_i}}{2} \leq m - \overline{PQ_{i+1}} < 0
\label{eq:theorem1:10}
\end{equation}

\noindent substituting equation \ref{eq:lemma1} in inequality \ref{eq:theorem1:10} we have

\begin{equation}
-\frac{1}{2\cdot n} \leq m - \overline{PQ_{i+1}} < 0
\label{eq:theorem1:11}
\end{equation}

\noindent Finally, substituting $\overline{PQ}$ for $\overline{PQ_{i+1}}$ in inequality \ref{eq:theorem1:11} and obtaining the absolute value results

\begin{equation}
|m-\overline{PQ}|\leq \frac{1}{2\cdot n}
\label{eq:theorem1:12}
\end{equation}

\end{proof}

The above theorem establish upper and lower boundaries for the $\boldsymbol{\mu}$ function of an optimal DSS. For instance, as before, let $S_1, \ldots, S_7$ be rational DSS with two periods and let them be defined by the seven basic set of points in Fig. \ref{fig:DSS1.3} respectively. Because $S_1$ is defined by the optimal set of points, the boundaries described by theorem \ref{theorem:optimalBoundary} are only valid for $\boldsymbol{\mu}_{S_1}$ and are shown in Fig. \ref{fig:slopeBoundaries:1}. The next corollary uses the boundaries of the slope described by theorem \ref{theorem:optimalBoundary} to extract two lines that serve as boundaries for the points of any DSS.

\begin{corollary}
\label{corollary:limitLines}
Let the point $P(x_0,y_0)$ in theorem \ref{theorem:optimalBoundary} belong to the real line defined by $y=m\cdot x + c$, and let $0 \leq a,b \leq 1/2 \in \mathbb{R}$. Then the points belonging to the digital line $L$ lay between two boundary lines defined

\begin{equation}
y=m\cdot x + c-b
\label{eq:corollary:limitLines:10}
\end{equation}

\noindent and

\begin{equation}
y=m\cdot x + c+a
\label{eq:corollary:limitLines:20}
\end{equation}

\end{corollary}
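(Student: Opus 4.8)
The plan is to translate Theorem~\ref{theorem:optimalBoundary}, which bounds a \emph{slope}, into a bound on a \emph{$y$-coordinate}, and then simply read off the two boundary lines. First I would pin down the real line through $P$: since $P(x_0,y_0)$ lies on $y=m\cdot x+c$ we have $c=y_0-m\cdot x_0$. Next, for an arbitrary successive point $Q$ of $P$ in $L$ at distance $D_8=n$, I would use the first–octant hypothesis $0\leq m\leq 1$: there the $D_8$-distance equals the horizontal displacement, so $x_Q=x_0+n$ and the slope satisfies $\overline{PQ}=(y_Q-y_0)/n$, i.e.\ $y_Q=y_0+n\cdot\overline{PQ}$. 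Evaluating the real line at $x=x_Q$ gives $m\cdot x_Q+c=m\cdot x_0+c+m\cdot n=y_0+m\cdot n$, so the signed vertical gap between $Q$ and the real line is $y_Q-(m\cdot x_Q+c)=n\cdot(\overline{PQ}-m)$.

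The heart of the argument is then one substitution: by Theorem~\ref{theorem:optimalBoundary}, $|\overline{PQ}-m|\leq 1/(2\cdot n)$, hence $|y_Q-(m\cdot x_Q+c)|\leq 1/2$ for every such $Q$ (and trivially for $Q=P$, where the gap is $0$). I would then take $a$ to be the largest nonnegative value of $y_Q-(m\cdot x_Q+c)$ over the points $Q$ of $L$, and $b$ the largest value of $(m\cdot x_Q+c)-y_Q$; both lie in $[0,1/2]$ by the previous bound, and by construction every point $(x,y)$ of $L$ satisfies $m\cdot x+c-b\leq y\leq m\cdot x+c+a$, which are exactly the lines~\ref{eq:corollary:limitLines:10} and~\ref{eq:corollary:limitLines:20}. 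This is the whole content of the corollary.

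The step I expect to need the most care is the identity $y_Q=y_0+n\cdot\overline{PQ}$: it is the only place where geometry (rather than pure algebra) enters, and it relies on the first–octant hypothesis so that the horizontal displacement from $P$ to $Q$ is precisely $n=D_8(P,Q)$. A secondary point to be explicit about is the scope of ``any DSS'': Theorem~\ref{theorem:optimalBoundary} assumes the successive points of $P$ are optimal with respect to $P$, so as stated the corollary inherits that hypothesis; for a DSS given by a non-optimal basic set one would instead appeal to the weaker fact that its vertical thickness is still at most $1$, which only enlarges the admissible ranges of $a$ and $b$ but leaves the form of the conclusion unchanged.
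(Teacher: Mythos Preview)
Your proof is correct and follows essentially the same idea as the paper's: both rest on the identity that the vertical offset of $Q$ from the real line equals $n\cdot(\overline{PQ}-m)$, so Theorem~\ref{theorem:optimalBoundary} immediately caps that offset at $1/2$. The paper runs the computation in the reverse direction---computing the slope from $P$ to a point on the shifted line $y=mx+c-b$ (resp.\ $+a$) and observing it differs from $m$ by $b/|x-x_0|\leq 1/(2\cdot|x-x_0|)$---but the underlying algebra is identical to yours.
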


\begin{proof}
The absolute value of $m$ minus the slope between $P$ and the points in $y=m\cdot x + c-b$ is

\begin{equation}
b\cdot \Big{|}\frac{1}{x-x_0}\Big{|}
\label{eq:corollary:limitLines:30}
\end{equation}

\noindent and the absolute value of $m$ minus the slope between $P$ and the points in $y=m\cdot x + c+a$ is

\begin{equation}
a\cdot \Big{|}\frac{1}{x-x_0}\Big{|}
\label{eq:corollary:limitLines:40}
\end{equation}

The distance $D_8$ between $P$ and the points of both boundary lines is $x-x_0$, thus the term $x-x_0$ in equations \ref{eq:corollary:limitLines:30} and \ref{eq:corollary:limitLines:40} is the distance $D_8$.  By definition $0 \leq a,b \leq 1/2$, hence for all integer values of $x$ is true that

\begin{equation}
a\cdot \Big{|}\frac{1}{x-x_0}\Big{|},\; b\cdot \Big{|}\frac{1}{x-x_0}\Big{|} \leq \frac{1}{2}\cdot \Big{|}\frac{1}{x-x_0}\Big{|}
\label{eq::limitLines:3}
\end{equation}

\end{proof}

Continuing with our example, from corollary \ref{corollary:limitLines} it follows that the general boundary lines for the points belonging to $S_1$ are the lines defined by $y=2/7\cdot x-1/2$ and $y=2/7\cdot x+1/2$, but the exact boundary lines are defined by $y=2/7\cdot x-3/7$ and $y=2/7\cdot x+3/7$, as illustrated in Fig. \ref{fig:slopeBoundaries:3}. The next corollary uses the boundary lines defined by corollary  \ref{corollary:limitLines} to extract the boundaries of the slope for any DSS.

\begin{figure*}[t]
\centering
\subfloat[Slope boundaries for $\boldsymbol{\mu}_{S_1}$.]{\label{fig:slopeBoundaries:1} \includegraphics[scale=0.65]{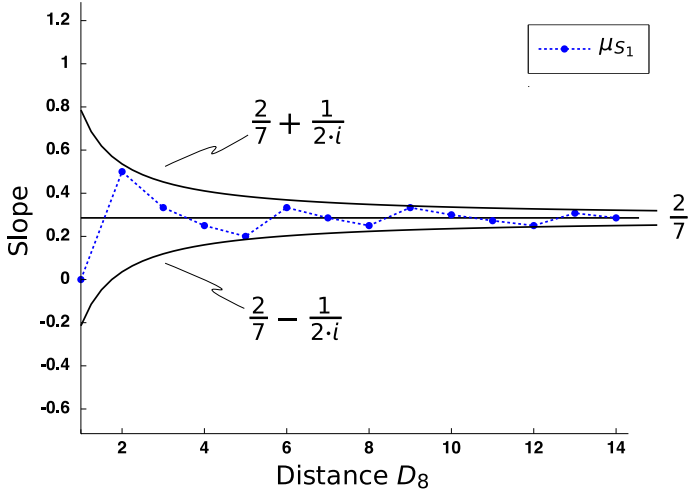} }
\subfloat[Exact boundary lines for $S_1$.]{\label{fig:slopeBoundaries:3} \hspace{0.37cm}\includegraphics[scale=0.65]{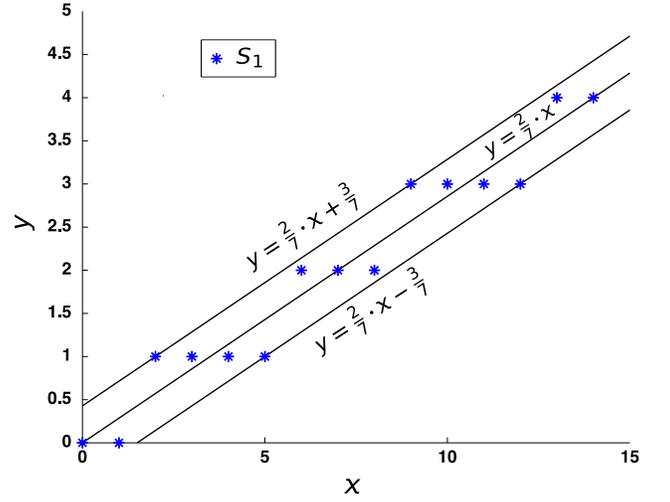} \hspace{0.37cm}}

\subfloat[Exact slope boundaries for $\boldsymbol{\mu}_{S_3}$.]{\label{fig:slopeBoundaries:4} \includegraphics[scale=0.65]{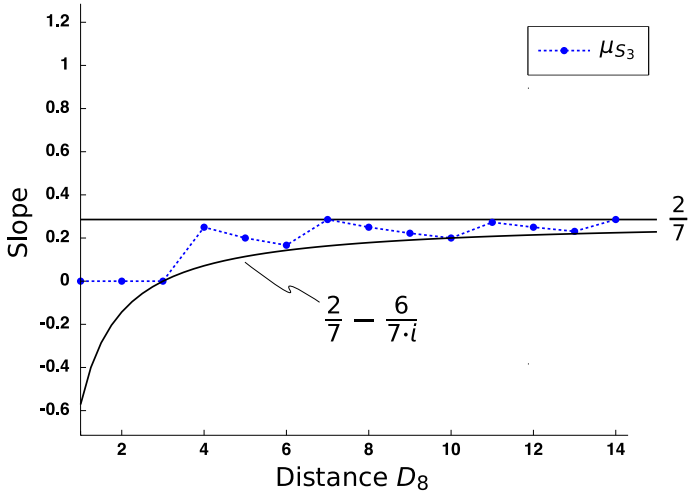} }
\subfloat[General slope boundaries for the $\boldsymbol{\mu}$ function of $S_1, \ldots, S_7$.]{\label{fig:slopeBoundaries:2} \includegraphics[scale=0.65]{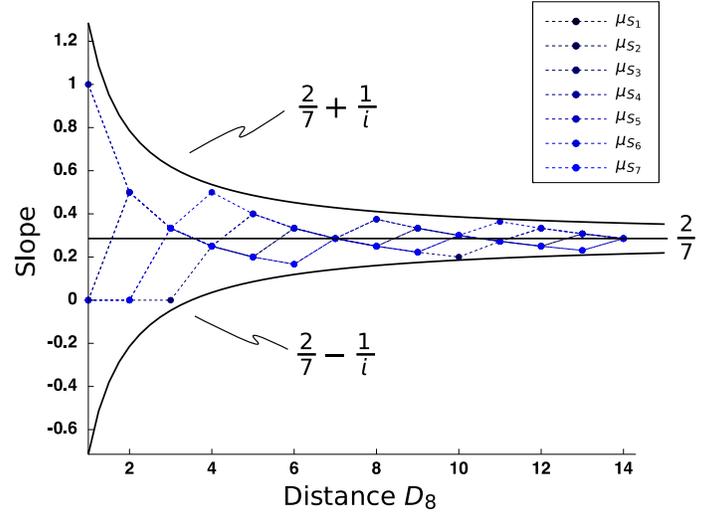} }
\caption[]{Slope and line boundaries for the family of basic sets that appear in Fig. \ref{fig:DSS1.3}.}
\label{fig:slopeBoundaries}
\end{figure*}

\begin{corollary}
\label{corollary:generalBoundaries}

Let $k \in \mathbb{R}$ be $-b \leq k \leq a$, let the point  $R(x_1, y_1)$ belong to $L$ and to the real line $y=m\cdot x + c+k$. Then the slope between $R$ and any other point belonging to $L$ lies between

\begin{equation}
m-(b+k)\cdot \frac{1}{x-x_1}
\label{eq:corollary:generalBoundaries:10}
\end{equation}

\noindent and

\begin{equation}
m+(a-k)\cdot \frac{1}{x-x_1}
\label{eq:corollary:generalBoundaries:20}
\end{equation}

\end{corollary}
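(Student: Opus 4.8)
The plan is to derive the statement as a direct computation on top of Corollary~\ref{corollary:limitLines}. First I would write the hypotheses in coordinates: since $R(x_1,y_1)$ lies on the real line $y=m\cdot x+c+k$, we have $y_1=m\cdot x_1+c+k$, and since $R$ also belongs to $L$, this offset is automatically compatible with Corollary~\ref{corollary:limitLines}, which is precisely why the hypothesis $-b\le k\le a$ is the natural one. Then I would take an arbitrary point $(x,y)$ of $L$ lying to the right of $R$, so that its $D_8$ distance from $R$ in the first octant equals $x-x_1>0$, and invoke Corollary~\ref{corollary:limitLines} for $(x,y)$ to obtain
\begin{equation}
m\cdot x+c-b\;\le\;y\;\le\;m\cdot x+c+a .
\label{eq:genb:bound}
\end{equation}

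Next I would subtract $y_1=m\cdot x_1+c+k$ throughout \eqref{eq:genb:bound}, which gives
\begin{equation}
m\cdot(x-x_1)-(b+k)\;\le\;y-y_1\;\le\;m\cdot(x-x_1)+(a-k).
\label{eq:genb:shift}
\end{equation}
Because $x-x_1>0$, dividing \eqref{eq:genb:shift} by $x-x_1$ preserves the inequalities, yielding
\begin{equation}
m-(b+k)\cdot\frac{1}{x-x_1}\;\le\;\frac{y-y_1}{x-x_1}\;\le\;m+(a-k)\cdot\frac{1}{x-x_1}.
\label{eq:genb:final}
\end{equation}
Since $\dfrac{y-y_1}{x-x_1}$ is by definition the slope between $R$ and $(x,y)$, inequality \eqref{eq:genb:final} is exactly the claim. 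I would close by noting that $-b\le k\le a$ forces $b+k\ge 0$ and $a-k\ge 0$, so the lower expression really is $\le m$ and the upper one $\ge m$; in particular both collapse to $m$ as $x-x_1\to\infty$, and the case $R=P$, $k=0$, $a=b=1/2$ recovers Theorem~\ref{theorem:optimalBoundary}.

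I do not expect a genuine obstacle here; the content is bookkeeping. The point needing the most care is the orientation of $(x,y)$: the argument uses $x-x_1>0$, so the statement concerns successive points to the right of $R$ (matching the convention fixed for $\boldsymbol{\mu}_{Lp}$), and one must either restrict to that case or pass to absolute values as in Theorem~\ref{theorem:optimalBoundary}. The secondary subtlety is to keep the vertical offsets $a,b$ of Corollary~\ref{corollary:limitLines} measured from the \emph{same} base line $y=m\cdot x+c$ while $R$ sits on the shifted line $y=m\cdot x+c+k$; substituting $y_1=m\cdot x_1+c+k$ before estimating is precisely what reconciles the two corollaries and produces the mixed terms $b+k$ and $a-k$.
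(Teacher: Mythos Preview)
Your argument is correct and is essentially the paper's own proof: the paper likewise uses $y_1=m\cdot x_1+c+k$ and computes the slope from $R$ to the two boundary lines of Corollary~\ref{corollary:limitLines}, obtaining exactly the expressions $m-(b+k)/(x-x_1)$ and $m+(a-k)/(x-x_1)$. Your version is a bit more explicit about the inequality chain and the sign of $x-x_1$, but the route is the same.
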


\begin{proof}
The slope between $R$ and the lower limit line defined by equation \ref{eq:corollary:limitLines:10} is

\small
\begin{equation}
\frac{m\cdot x + c-b - (m\cdot x_1+c+k)}{x-x_1}=m-(b+k)\cdot \frac{1}{x-x_1}
\label{eq:corollary:generalBoundaries:30}
\end{equation}
\normalsize

\noindent and the slope between $R$ and the upper limit line defined by equation \ref{eq:corollary:limitLines:20} is

\small
\begin{equation}
\frac{m\cdot x + c+a - (m\cdot x_1+c+k)}{x-x_1}=m+(a-k)\cdot \frac{1}{x-x_1}
\label{eq:corollary:generalBoundaries:40}
\end{equation}
\normalsize

\end{proof}

This last corollary means that if we know the boundary lines of a particular DSS, we can extract its specific slope boundaries. For example, from theorem \ref{theorem:optimalBoundary} we already have a slope boundary for $\boldsymbol{\mu}_{S_1}$, but now that we know the line boundaries of $S_1$ we can extract its exact slope boundaries that are $\frac{2}{7}-\frac{3}{7\cdot i}$ and $\frac{2}{7}+\frac{3}{7\cdot i}$. In the case of a non-optimal DSS like $S_3$ we can also extract its specific slope boundaries, which are $\frac{2}{7}-\frac{6}{7\cdot i}$ and $\frac{2}{7}$ as illustrated in Fig. \ref{fig:slopeBoundaries:4}. The next corollary it's simple and important for formulation of the method describe in this paper.

\begin{corollary}
\label{corollary:sizeOfGeneralBoundaries}
The absolute value of the difference between the upper and lower slope boundaries defined in corollary \ref{corollary:generalBoundaries} is less or equal than $\Big{|}\frac{1}{x-x_1}\Big{|}$
\end{corollary}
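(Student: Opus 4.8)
The plan is to compute directly the difference of the two slope boundaries supplied by Corollary~\ref{corollary:generalBoundaries} and to notice that the dependence on the offset $k$ cancels. Writing $U=m+(a-k)\cdot\frac{1}{x-x_1}$ for the upper boundary of equation~\ref{eq:corollary:generalBoundaries:20} and $V=m-(b+k)\cdot\frac{1}{x-x_1}$ for the lower boundary of equation~\ref{eq:corollary:generalBoundaries:10}, I would subtract the two expressions to obtain
\begin{equation}
U-V=\big((a-k)+(b+k)\big)\cdot\frac{1}{x-x_1}=(a+b)\cdot\frac{1}{x-x_1},
\label{eq:sizeproof1}
\end{equation}
so the width of the slope band equals $(a+b)\cdot\frac{1}{x-x_1}$, independently of where the reference point $R$ sits on its line $y=m\cdot x+c+k$.

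The second step is to take absolute values and bound the factor $a+b$. By the hypotheses carried over from Corollary~\ref{corollary:limitLines} we have $0\leq a,b\leq 1/2$, hence $0\leq a+b\leq 1$, and in particular $|a+b|\leq 1$. Combining this with equation~\ref{eq:sizeproof1} gives
\begin{equation}
|U-V|=|a+b|\cdot\left|\frac{1}{x-x_1}\right|\leq\left|\frac{1}{x-x_1}\right|,
\label{eq:sizeproof2}
\end{equation}
which is precisely the claimed inequality.

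There is essentially no obstacle here: the only point worth recording explicitly is the cancellation of the $k$ terms in~(\ref{eq:sizeproof1}), which makes the bound uniform over every admissible position of $R$; the rest is the elementary estimate $a+b\leq 1$ inherited from Corollary~\ref{corollary:limitLines}. One may also remark, as a sanity check, that the bound is attained, $|U-V|=\left|\frac{1}{x-x_1}\right|$, exactly when $a=b=1/2$, i.e.\ for the generic boundary lines $y=m\cdot x+c\pm 1/2$ rather than the tighter exact ones, which is consistent with the examples discussed after Corollary~\ref{corollary:generalBoundaries}.
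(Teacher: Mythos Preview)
Your argument is correct and is essentially the same as the paper's: the paper also computes the absolute difference as $(a+b)\cdot\bigl|\frac{1}{x-x_1}\bigr|$ and then invokes $a+b\leq 1$ from the bounds $0\leq a,b\leq 1/2$ in Corollary~\ref{corollary:limitLines}. Your explicit display of the $k$-cancellation and the remark on when equality holds are minor expository additions, not a different route.
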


\begin{proof}
The absolute value of the difference between the upper and lower slope boundaries is

\begin{equation}
(a+b)\cdot\Big{|}\frac{1}{x-x_1}\Big{|}
\label{eq:corollary:sizeOfGeneralBoundaries:10}
\end{equation}

\noindent By definition $a+b \leq 1$, then

\begin{equation}
(a+b)\cdot\Big{|}\frac{1}{x-x_1}\Big{|} \leq \Big{|}\frac{1}{x-x_1}\Big{|}
\label{eq:corollary:sizeOfGeneralBoundaries:20}
\end{equation}

\end{proof}

This last corollary states how far from the real slope could be the value of a $\boldsymbol{\mu}$ function of any DSS, i.e.\ corollary \ref{corollary:sizeOfGeneralBoundaries} establish the general upper and lower boundaries for the $\boldsymbol{\mu}$ function of any DSS. Let the point  $P$ and $Q$ belong to a digital line $L$ with slope $m$ such that  $0\leq m\leq1$,  and let the point $Q$ be any other point belonging to $L$ with a distance $D_8=n$ and a slope $\overline{PQ}$ with respect to $P$. Then, by corollary \ref{corollary:sizeOfGeneralBoundaries} it's true that

\begin{equation}
\Big{|}m-\overline{PQ}\Big{|}\leq\frac{1}{n}
\label{eq:generalBoundary}
\end{equation}

As an example, in Fig. \ref{fig:slopeBoundaries:2} we can see the general slope boundaries for all the $\boldsymbol{\mu}$ function of $S_1, \ldots, S_7$.

\subsection{DSS Determination Methods}

\subsubsection{Determination of two connected DSS using the slope}
\label{sec:particularMethod}

It's easy to determine real line segments from a given sequence of points using the slope, because the slope between any two different points of the real line is always the same. Due this, if $p_0, p_1, \ldots, p_n \in \mathbb{R}^2$ are the sequence of points that describes two connected line segments and we need to find the vertex that connects them, then we just simply calculate the slope $m_1, \ldots, m_n$ of $p_1, \ldots, p_n$ with respect to $p_0$ and the vertex will be the point with slope $m_i$ different from $m_1,\ldots, m_{i-1}$. As an example of this, Fig. \ref{fig:realMethod} shows two connected line segments and their vertex obtained using the slope between the point $(0, 0)$ and the its successive points in the segments.

\begin{figure}[t]
\centering
\includegraphics[scale=0.5]{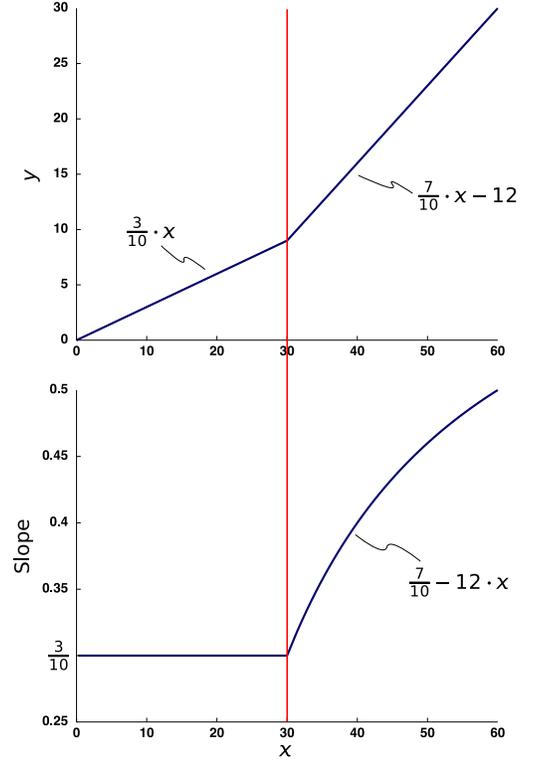} 
\caption[Two connected real line segments (upper), and the slope between the point $(0, 0)$ and its successive points in the segments (lower)]{Two connected real line segments (upper), and the slope between the point $(0, 0)$ and its successive points in the segments (lower).}
\label{fig:realMethod}
\end{figure}

This approach for finding real line segments from a sequence of points could be adapted for digital lines using the theorems stated in section \ref{sec:slopeBoundaries}. So let $\rho_0(x_0, y_0), \rho_1(x_1, y_1), \ldots, \rho_n(x_n, y_n) \in\mathbb{Z}^2$ be a sequence of points with distance $D_8=1$ between any two consecutive points, and let $\mu(i)$ be the slope between $\rho_0$ and $\rho_i$. We assume that $\rho_0, \rho_1, \ldots, \rho_n$ form one or more DSS.

\begin{figure*}[t]
\centering
\subfloat[Unknown slope boundaries of $\mu$.]{\label{fig:methodExample:1} \includegraphics[scale=0.7]{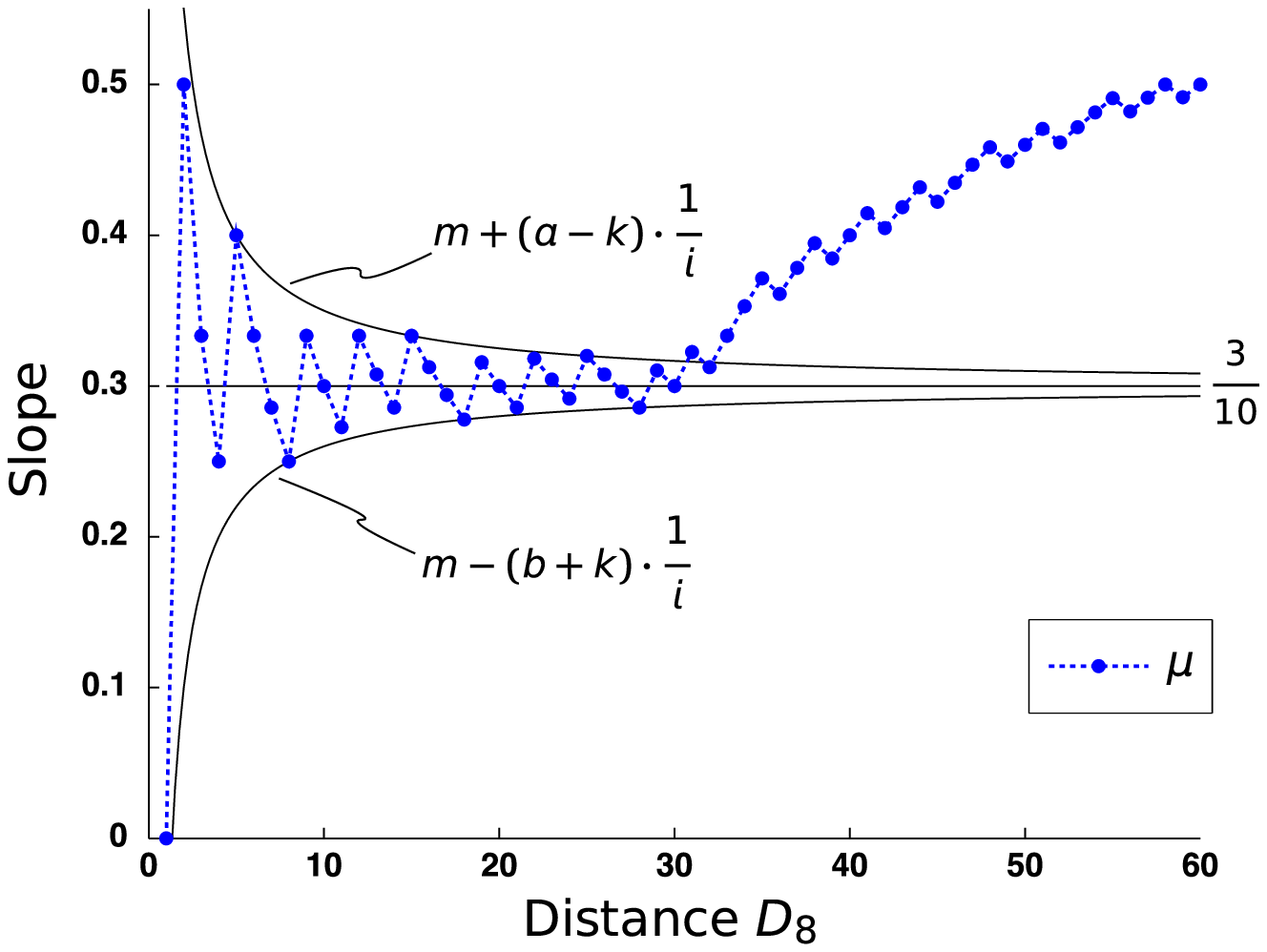} }
\subfloat[First slope boundaries based on $\mu$.]{\label{fig:methodExample:2} \includegraphics[scale=0.7]{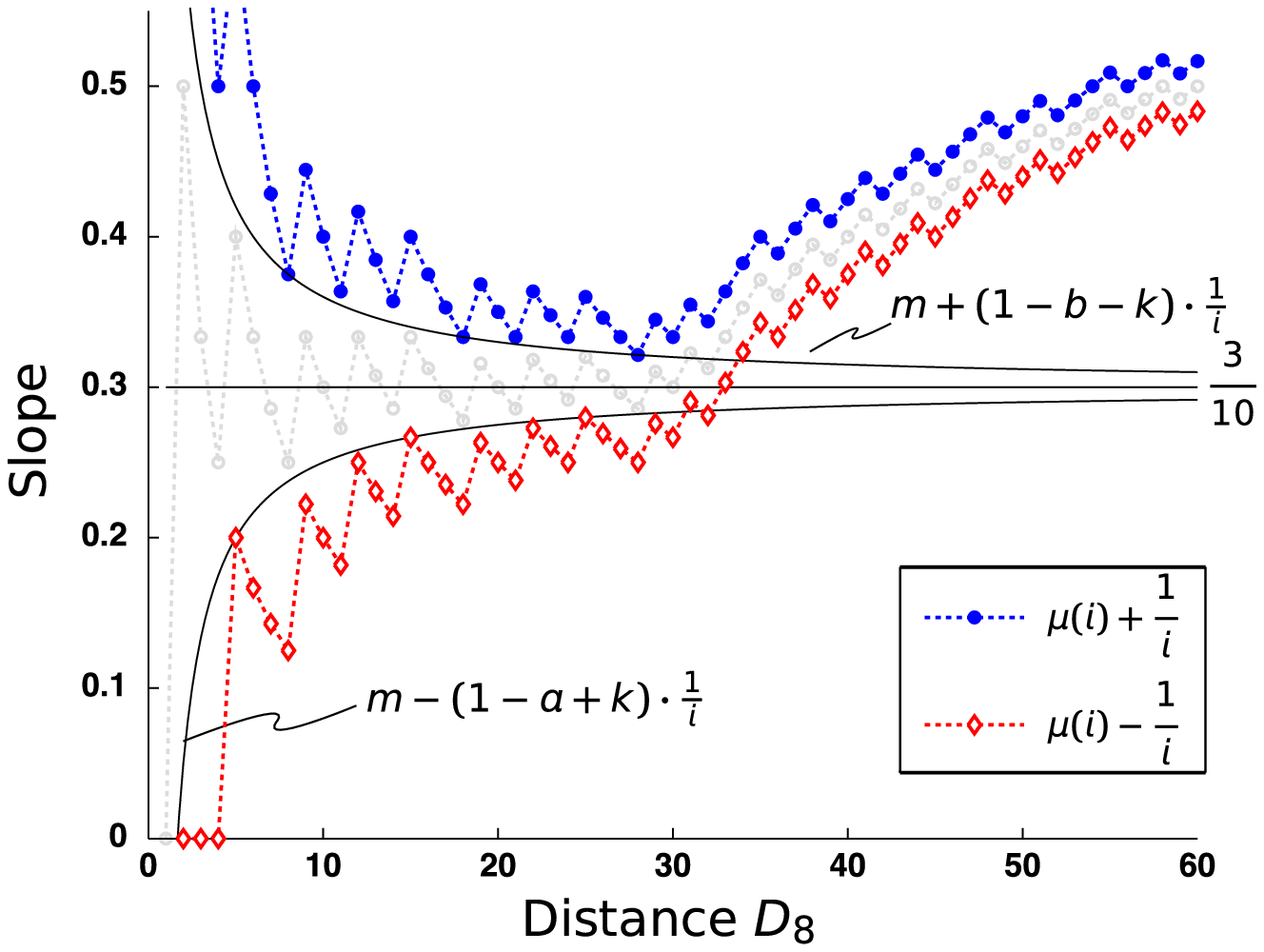} }

\subfloat[Final slope boundaries based on $\mu$.]{\label{fig:methodExample:3} \includegraphics[scale=0.7]{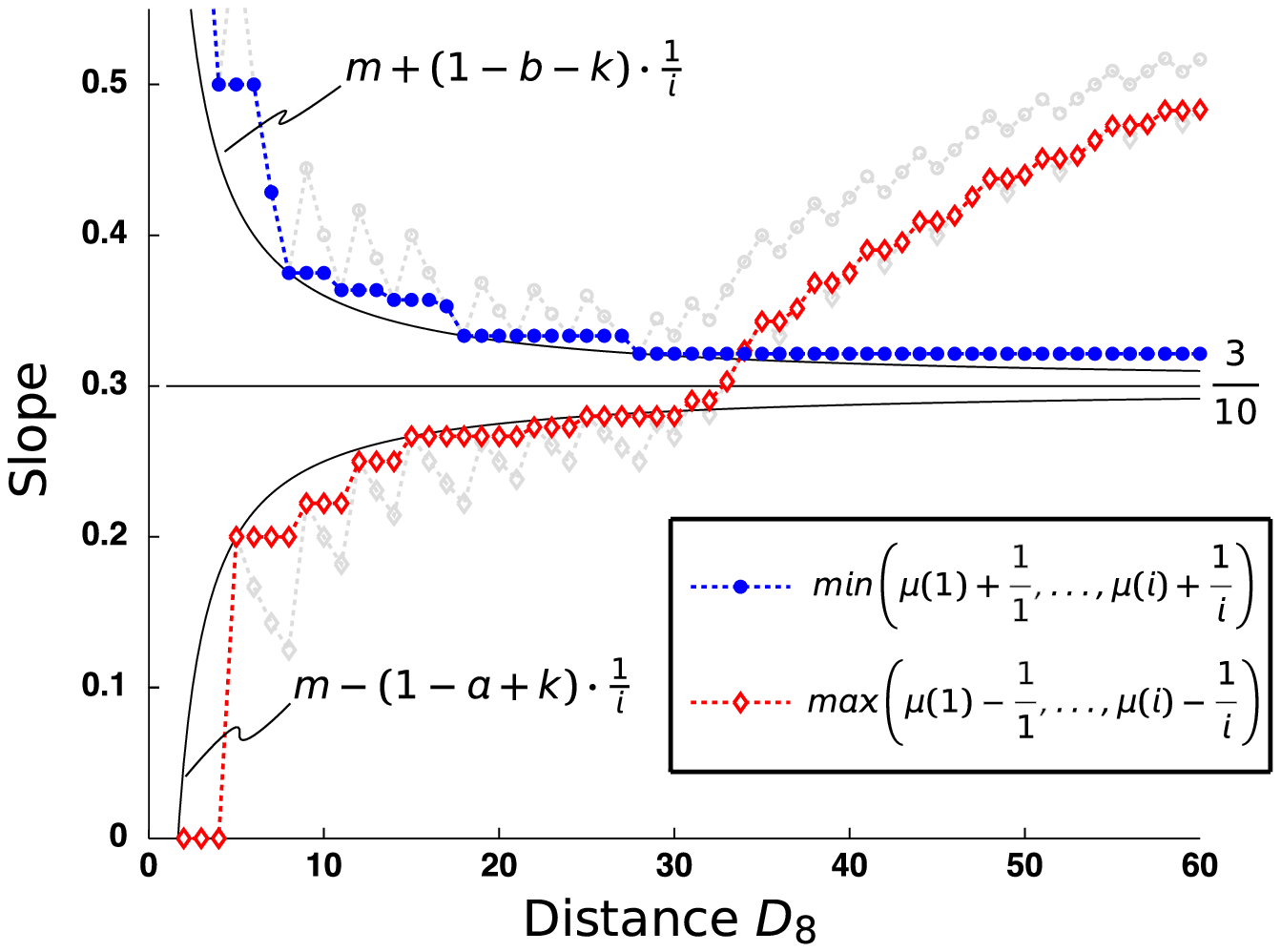} }
\subfloat[The difference between the final slope boundaries.]{\label{fig:methodExample:4} \hspace{0.3661cm}\includegraphics[scale=0.7]{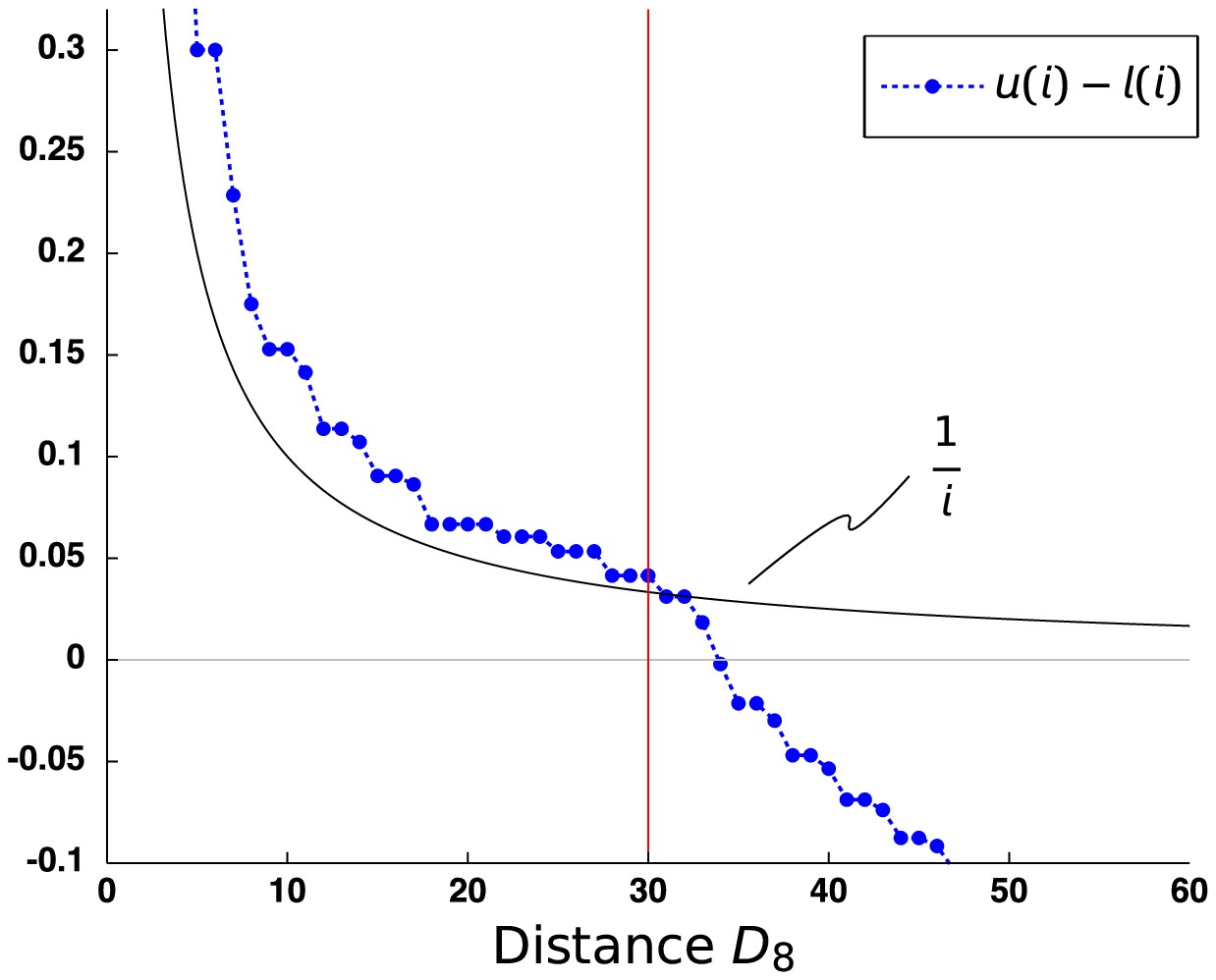} \hspace{0.3661cm}}
\caption[]{Separation of the digital counterpart of the two connected segments of Fig. \ref{fig:realMethod} based on its slope.}
\label{fig:methodExample}
\end{figure*}

We need to extract something that resembles slope boundaries from the points of $\mu$. Let $\rho_{\eta}$ be the last point of the first DSS in $\rho_0(x_0, y_0), \rho_1(x_1, y_1), \ldots, \rho_n(x_n, y_n)$. From corollary \ref{corollary:generalBoundaries} we know that for the first $\eta$ values of $\mu$ is true that

\begin{equation}
\mu(i) \geq m-(b+k)\cdot \frac{1}{i}
\label{eq:method:10}
\end{equation}

\noindent and

\begin{equation}
\mu(i) \leq m+(a-k)\cdot \frac{1}{i}
\label{eq:method:20}
\end{equation}

\noindent For example, Fig. \ref{fig:methodExample:1} shows that the first 30 values of the digital counterpart of the slope shown in Fig. \ref{fig:realMethod} are between those boundaries.

In order to obtain new slope boundaries from $\mu$, lets add $1/i$ to equation \ref{eq:method:10}, thus we have

\begin{equation}
\mu(i)+\frac{1}{i}\geq m+(1-b-k)\cdot \frac{1}{i}
\label{eq:method:30}
\end{equation}

\noindent and lets subtract $1/i$ to equation \ref{eq:method:20}

\begin{equation}
\mu(i)-\frac{1}{i}\leq m-(1-a+k)\cdot \frac{1}{i}
\label{eq:method:40}
\end{equation}

\noindent We can see in Fig.\ref{fig:methodExample:2} that both conditions still hold for the first 30 values of $\mu$. Now, let $l$ and $u$ represent the lower and upper slope boundary respectively, let $l(i)$ be defined as

\begin{equation}
l(i)=max\Bigg{(}\mu(1)-\frac{1}{1}, \ldots,\mu(i)-\displaystyle \frac{1}{i}\Bigg{)}
\label{eq:method:60}
\end{equation}

\noindent and let $u(i)$ be defined as

\begin{equation}
u(i)=min\Bigg{(}\mu(1)+\frac{1}{1}, \ldots,\mu(i)+\displaystyle \frac{1}{i}\Bigg{)}
\label{eq:method:70}
\end{equation}

\noindent These new boundaries appear in Fig. \ref{fig:methodExample:3}. From equations \ref{eq:method:30} and \ref{eq:method:40} we know is true that

\begin{equation}
\mu(i)+\frac{1}{i}\geq u(i)\geq m+(1-b-k)\cdot \frac{1}{i}
\label{eq:method:80}
\end{equation}

\noindent and

\begin{equation}
\mu(i)-\frac{1}{i}\leq l(i)\leq m-(1-a+k)\cdot \frac{1}{i}
\label{eq:method:90}
\end{equation}

Using equations \ref{eq:method:80} and \ref{eq:method:90} we can find the next relation about the difference between $u(i)$ and $l(i)$

\begin{equation}
u(i)-l(i)\geq \frac{2-(a+b)}{i}
\label{eq:method:100}
\end{equation}

Finally, from corollary \ref{corollary:sizeOfGeneralBoundaries} is true that

\begin{equation}
u(i)-l(i)\geq \frac{2-(a+b)}{i}\geq\frac{1}{i}
\label{eq:method:110}
\end{equation}

Therefore, only the consecutive points that satisfy this last equation belong to the first DSS, and thus, we can use it to find the vertex of two connected DSS. For instance, Fig. \ref{fig:methodExample:4} shows that the difference between the slope boundaries is less than $1/i$ starting at a distance $D_8$ equal to 31, so the vertex is the point $\rho_{30}$.

\subsubsection{Freeman's Chain Coding Scheme}

Freeman\cite{Freeman1970} describes the borders of objects quantizing the direction of pairs of consecutive pixels by elements coded with the integers $0,1, \ldots, 7$. For example, Fig. \ref{fig:freemanChainCodeScheme} shows a chain-coded boundary. Freeman states in \cite{Freeman1961} that chains of straight lines must possess the next specific properties: 
\begin{enumerate}
\item At most two types of symbols can be present, and these can differ only by unity, modulo eight.
\item One of the two symbols always occurs singly.
\item Successive occurrences of the single symbol are as uniformly spaced as possible.
\end{enumerate}

It's important to note that the third property is not precisely formulated (for a better and formal specification of the third property see \cite{Rosenfeld1974}).

\begin{figure}[h]
\centering
\includegraphics[scale=0.95]{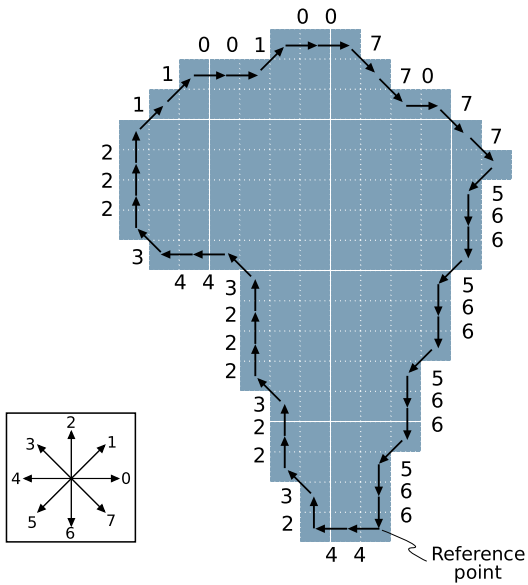}
\caption[Chain coding scheme (lower left), and a chain-coded boundary]{Chain coding scheme (lower left), and the boundary with a chain code equal to 44232232223443222110010077077566566566566.}
\label{fig:freemanChainCodeScheme}
\end{figure}
%

\section{Tracing Digital Straight Segments using the Slope}
\label{sec:method}

To solve the problem stated, our method combines the first two properties about chain codes and the general version of the method introduced in section \ref{sec:particularMethod}. The method returns a list of vertexes $v_1, v_2, \ldots, v_m$ between the found DSS. The method iterates throw every point in $\rho_0, \ldots, \rho_n \in\mathbb{Z}^2$, and for each point $i$ obtains its chain code and evaluates if the first two properties are true. If any of the two properties is false, it means that $\rho_i$ is one vertex, and we continue evaluating the next points. When both properties are true for $\rho_i$ we still don't know if is a vertex or not, so we evaluate $\rho_i$ using the slope.

The general version of the method presented in section \ref{sec:particularMethod} changes two things from the original. First, the function $\mu$ is the slope between the point $\rho_i$ and the last vertex found $v_j$ and the distance $D_8$ considered is between them. Second, if the slope between $\rho_i$ and $v_j$ is not in $[-1, 1]$, then the value of $\mu$ for $\rho_i, \rho_{i+1}, \ldots$ and $v_j$ is the inverse of their slope.

\begin{figure}[t]
\centering
\includegraphics[scale=0.72]{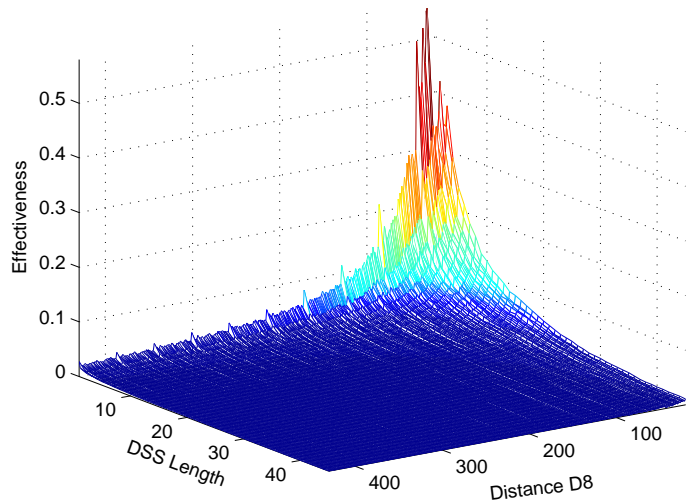} 
\caption{Average of effectiveness}
\label{fig:average}
\end{figure}

\section{Tests}
\label{sec:tests}

The tests we performed were intended to measure the effectiveness of our method in finding the exact vertex between two connected DSS of the same length $l$. As a metric of effectiveness we employed the division of the distance $D_8$ between the real vertex and the found vertex by the length $l$. This metric is in the range of 0 to 1, where 0 means a perfect match and 1 means total failure.

Our data bank consisted in 811800 cases of two connected rational DSS. To build our synthetic data bank we first chose the slopes of the first segment to evaluate, in this case were the slopes: $0, 1/43, 2/43, \ldots, 43/43$. It's important to mention that the denominator had to be a prime number so all the first DSS had the same length of period. Then for every slope of the first DSS we calculate 45 slopes for the second DSS that differ from them by $1^{\circ}, 2^{\circ}, \ldots, 45^{\circ}$. Finally, the length $l$ for both DSS in all cases were in the range of 20 and 430, so we could evaluate them from their half period to their tenth.

\section{Conclusions and further work}

We have presented a novel method for the determination of DSS using the slope. Our method its based upon some unseen properties of the slope. We believe that this properties are important because they could led to find new properties about the DSS and its description. They already could be used to proof in an easier fashion the chord property stated by Rosenfeld in \cite{Rosenfeld1974}. Besides this, it seems that the family of slope $\boldsymbol{\mu}$ of any DSS could be described using the exact slope boundaries introduced in this paper and a set of functions like $\alpha, \alpha+\beta/i,  \alpha+2\cdot\beta/i, \ldots$ For instance, Fig. \ref{fig:conclusion} shows that all the possible values of $\boldsymbol{\mu}$ that appear in Fig. \ref{fig:slopeBoundaries:2} could be obtain from the intersection of $0, 1/i, 2/i, 3/i, 4/i$ and the area between $2/7-3/7\cdot i$ and $2/7+3/7\cdot i$.

Our method needs to be tested against other methods in order to compare their effectiveness and performance. We also want to extend our work for the evaluation of the digital curvature.

\begin{figure}[h]
\centering
\includegraphics[scale=0.7]{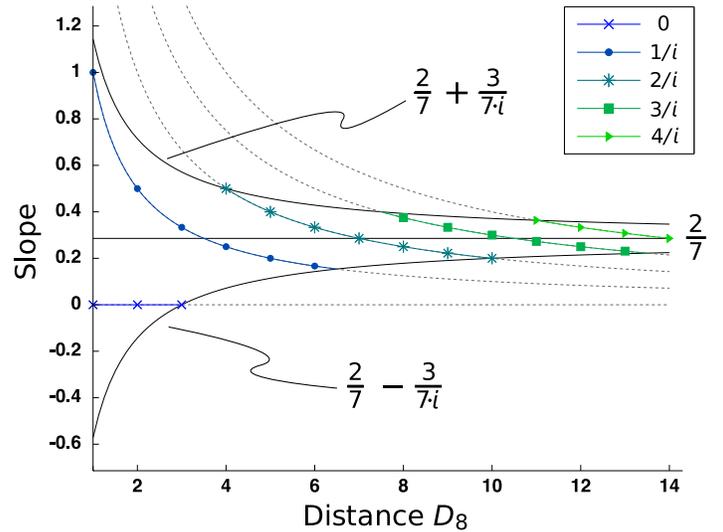}
\caption{All the possible values of the $\boldsymbol{\mu}$ function of a DSS with slope $2/7$ for their two first periods.}
\label{fig:conclusion}
\end{figure}

\end{document}